\newtheorem{lemma}{Lemma}
\begin{document}

\title{Data-Driven Measurement Models for Active Localization in Sparse Environments}


\author{\authorblockN{Ian Abraham,
Anastasia Mavrommati,
Todd D. Murphey}
\authorblockA{Department of Mechanical Engineering \\
Northwestern University,
Evanston, IL 60208 }
}

\maketitle

\begin{abstract}
We develop an algorithm to explore an environment to generate a measurement model for use in future localization tasks.
Ergodic exploration with respect to the likelihood of a particular class of measurement (e.g., a contact detection measurement in tactile sensing) enables construction of the measurement model.
Exploration with respect to the information density based on the data-driven measurement model enables localization.
We test the two-stage approach in simulations of tactile sensing, illustrating that the algorithm is capable of identifying and localizing objects based on sparsely distributed binary contacts.
Comparisons with our method show that visiting low probability regions lead to acquisition of new information rather than increasing the likelihood of known information.
Experiments with the Sphero SPRK robot validate the efficacy of this method for collision-based estimation and localization of the environment.

\end{abstract}

\IEEEpeerreviewmaketitle

\section{Introduction}

Sensors such as cameras~\cite{georgakisRSS17objectdetection, hengAutRob15calibrationVisualSLAM, nuger2016multicamera} and LiDAR~\cite{schwarz2010lidar, rasshofer2005automotive} provide high quality data about an environment with remarkably low sensitivity to the location of the sensor.
As a result, the resulting sensing and perception are quite rich and are typically independent of the control.
However, some sensing modalities, such as contact/impact sensors and range sensors, only provide useful information when they are in the correct configuration or sequence of configurations\textemdash for these sensors, high quality data is sparse in the environment and must therefore be sought out by utilizing control authority.
In this paper, we investigate the use of active sensing, through means of ergodic control~\cite{miller2016ergodic, miller2015optimalrange, mavrommati2017eSAC}, to provide control authority that seeks out useful information from sensors for which informative measurements are sparse in the environment.

In biology, we see this combination of movement and sensing in rodents that use their whiskers for tactile sensing~\cite{guic1989rats,carvell1990biometric,hobbs2015spatiotemporal, mitchinson2007feedback}. 
These biological sensors are active sensors (i.e., sensing is done through movement) and provide a rich set of contact and force information. 
Notably, no two whiskers are the same mechanically\textemdash more so across rodents of the same species\textemdash and yet each rodent is able to use these biological sensors to sense and explore their environment.
Similarly, humans use their hands to grasp and to feel objects by actively moving their fingers across objects.
New objects are rapidly incorporated into memory without needing to understand and model the complex physical interactions between the hand and the object. 
We are interested in how one automates the generation of these models, and what principles extend to robotic settings.

The combination of environmentally sparse, unmodeled sensor-environment interactions with motion leads us to the topic of this work: equip a robot with the means of active sensing when the environment is unknown and the sensor interaction with the environment is spatially sparse.
In doing so, we seek to improve the utility of sparse sensors (e.g. contact sensors~\cite{meierTRO11probApproachTactileShape, mitchinson2007feedback, hobbs2015spatiotemporal}), and their interaction with the environment by synthesizing movement, both for model generation and later identification and localization.
We approach this problem of active sensing as a coverage problem where a robot's motion (and the time spent in regions of space) should coincide with an \emph{expectation of informative sensory input}.
Ergodic control policies~\cite{shellMult06ergodic, mathew2011metrics, miller2013trajectory,miller2016ergodic} are used to compute active sensing strategies where uncertainty about the environment, the robot's sensors, and the interaction is reasoned about as a dynamically evolving area coverage problem. 
Thus, the contribution of the work is the construction and use of online data-driven distributions that, when used to specify the ergodic control, enables a robot to compensate, through motion, for spatially sparse sensor information and unknown environments to estimate and localize the environment. 
Simulations illustrate the phases of the algorithm and experiments validate our approach for collision-based robot sensing.

The paper outline is as follows: Section~\ref{sec:related_work} discusses literature related to this work. 
Section~\ref{sec:problem-statement} introduces the problem statement that is addressed in the paper.
Section~\ref{sec:erg} introduces ergodicity and the ergodic control policy.
Section~\ref{sec:shape_estimation} then formulates a method for constructing data-driven measurement models of the environment.
Section~\ref{sec:shape_localization} then describes how to use these empirically determined measurement models for localization of the environment.
Simulated and experimental results are in Sections~\ref{sec:sim_ex} and~\ref{sec:exp_ex} with conclusion in Section~\ref{sec:conclusions}.

\section{Related Work}
\label{sec:related_work}

Existing work in robotics that deals with unknown, environmentally sparse sensors typically focus on tactile sensing~\cite{pezzementiTRO11tactileAppear, matsubara2016active,mur2015probabilistic,lepora2013active, fox2012tactile, meierTRO11probApproachTactileShape, dune2008active, lepora2013active, yi2016active, martinezRAS17activeBayesSensormotor}.
Uncertainty distributions are generated that guide the robot towards regions where the robot is likely to acquire a positive contact measurement.
Sample-based methods~\cite{pezzementiTRO11tactileAppear, matsubara2016active,mur2015probabilistic,lepora2013active, dune2008active, fox2012tactile} or maximum likelihood methods~\cite{meierTRO11probApproachTactileShape, dune2008active, lepora2013active, yi2016active, martinezRAS17activeBayesSensormotor} are used to determine the robot's next area to sample.
An issue with these methods is the omission of low probability regions.
By omitting low probability regions, only information near prior high information areas is obtained to make decisions about where to move.
Our approach uses ergodic exploration to enable a robot to systematically explore all nonzero probability regions, avoiding overly focusing on already explored regions.

Ergodic exploration has been used previously to enable search for spatially distributed information~\cite{miller2016ergodic, miller2015optimalrange, mavrommati2017eSAC}, similar to sample-based methods.
In sample-based methods, the procedure is often to sample the search space and then control the robot to the desired states calculated from the samples.
The primary benefit of using ergodic control is the ability to avoid controllers that get stuck in local minima and cyclical patterns as we show in Section~\ref{sec:comparison_gEER}. 
By using ergodic exploration, our method is able to continuously explore an environment subject to the physical constraints imposed by the robot, the sensor, and the environment. 
In particular, these physical constraints are used to localize unknown environments directly from empirically determined models.
Prior work from the authors have illustrated the use of ergodic exploration for shape estimation~\cite{abrahamRAL17}.
Here, we extend this work for a more general class of sparse non-null measurements and show that the constructed models can be used as a method for data-based environment estimation and localization.

\section{Problem Statement} 
\label{sec:problem-statement}
The goal of this work is to enable a robot to autonomously localize an environment when sensory information is sparsely distributed in the environment.
Consider the state of an environment $\sigma \in \mathcal{O}$ where $\mathcal{O}$ is the set of points that defines the surface of elements in the environment that a sensor can interact with.\footnote{For generality, we define the set of points in $\mathbb{R}^v$, but one can consider the set of points in $\mathbb{R}^3$ as the outer surface of objects that interacts with a contact sensor.}
A robot's measurements, given its state $x(t) \in \mathbb{R}^n$, is defined as
\begin{equation}\label{eq:meas_model}
y = \Gamma(\sigma, x_v) + \delta \in \mathbb{Y}^p
\end{equation}
where $\delta \sim \mathcal{N}(0, \Sigma)$ is zero mean Gaussian noise with variance $\Sigma$, $\Gamma(\sigma, s) : \mathcal{O} \times \mathbb{R}^v \to \mathbb{Y}^p$ is the measurement model, $x_v\in \mathbb{R}^v \subset \mathbb{R}^n$ is the state of the robot that interacts with the environment, and $s\in \mathbb{R}^v$ is a point in the search space.
The measurements in (\ref{eq:meas_model}) are then sparse in the environment and discrete, yielding non-null measurements in particular regions of the search space (e.g., collision measurement in the environment with a finite number of objects).
 
Given that $\Gamma(\sigma, s)$ is initially unknown for a sparse environment defined by $\sigma$, how should a robotic agent empirically construct $\Gamma(\sigma, s)$?
Moreover, provided that $\Gamma(\sigma, s)$ was constructed in some search space $\mathcal{P}$, how can the robotic agent localize within the environment?
Thus, the problem statement that this paper addresses is the following: Given some search space $\mathcal{W}$ that has the environment $\sigma$ within it, estimate the transformation $g(\theta)$ given that $\Gamma(\sigma, s)$ is initially unknown and cannot be estimated by local exploration of $\mathcal{P}$.

\begin{figure}[]
\centering
\includegraphics[scale=1]{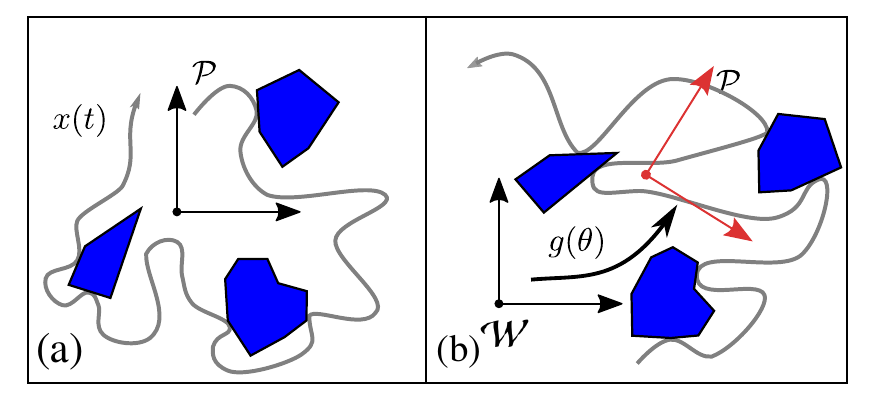} 
\caption{Objects in the environment are defined by the blue shapes. (a) A robot with an exploratory trajectory $x(t)$ is shown as the gray line estimating the measurement model of an environment in the fixed frame $\mathcal{P}$. Contact measurements are used to construct the measurement model. (b) The same environment originally shown in $\mathcal{P}$ is transformed into search space $\mathcal{W}$ under transform $g(\theta)$. The robot exploratory trajectory used to generate an estimate of the transform $g(\theta)$ is created using the data-driven measurement model in (a).}
\vspace{-5mm}
\label{fig:transf_illus}
\end{figure}

In this work we approach this problem by dividing it into two stages.
The first stage is to explore the environment and generate a data-driven measurement likelihood distribution from a set of $N$ measurements and a resulting measurement model for the environment.
This measurement likelihood distribution is necessary because the resulting distribution is not solely a sensor model, but rather a model of how the robot, environment, and the sensors interact with one another. 
The result is the construction of (\ref{eq:meas_model}), which is the measurement model for how the robot, the sensor, and the environment interact based on data.

The second stage is to use this constructed measurement model to localize and estimate the transform $g(\theta)$ from $\mathcal{P} \to \mathcal{W}$.
We illustrate the problem in Fig.~\ref{fig:transf_illus} for $\mathcal{P}, \mathcal{W} \in \mathbb{R}^2$.
Sections~\ref{sec:shape_estimation} and~\ref{sec:shape_localization} illustrate the environment measurement model construction and localization.

The following section introduces ergodicity and the ergodic metric.

\section{Ergodicity and Ergodic Control for Active Sensing}
\label{sec:erg}
In this section, we review ergodicity and the ergodic metric for active sensing and control for robotic systems~\cite{shellMult06ergodic, mathew2011metrics}.
Ergodicity and ergodic control enables us to specify the motion of the robot based on requiring that the time it spends in regions of state-space be proportional to desired spatial statistics in that region, where the desired spatial statistics are determined by an estimate of the information density.
It is later shown in Section~\ref{sec:comparison_gEER} through a comparison why using ergodic control is beneficial in the application of sparse sensing.

\subsection{Ergodicity}

Consider an agent whose state at time $t\in \mathbb{R}^+$ is $x(t) : \mathbb{R}^+ \to  \mathbb{R}^n$ and the control input to the robot as $u(t) : \mathbb{R}^+ \to \mathbb{R}^m$.
The time evolution of the robot is assumed to be governed by a control-affine dynamical system of the form
\begin{equation} \label{eq:robot_dynamics}
\dot{x} = f(x(t),u(t)) = g(x(t)) + h(x(t)) u(t)
\end{equation}
where $g(x) : \mathbb{R}^n \to \mathbb{R}^n$ is the free dynamic response of the robot, and $h(x): \mathbb{R}^n \to \mathbb{R}^{n \times m}$ is the driving dynamic response subject to input $u$.
Next, define a bounded search space domain $\mathcal{X}_v \subset \mathbb{R}^v$  whose limits are $\left[0,L_1 \right] \times \left[ 0,L_2 \right] \times \ldots \left[ 0, L_v\right]$ with $v\le n$. 
The time-averaged statistics $c(s, x(t))$ of the robot's trajectory $x(t)$ (i.e., where the robot spends most of its time) for some time interval $t \in \left[ t_i, t_i + T\right]$ is given by
\begin{equation}
c(s, x(t)) = \frac{1}{T}\int_{t_i}^{t_i+T} \delta (s - x_v(t)) dt
\end{equation}
where $\delta$ is a Dirac delta function, $T \in \mathbb{R}^+$ is the time horizon, $t_i \in \mathbb{R}^+$ is the $i^\text{th}$ sampling time, $s \in \mathbb{R}^v$ is a point in the search space, and $x_v(t) : \mathbb{R}^+ \to \mathbb{R}^v$ is the state of the robot that exists in the search space $\mathcal{X}_v$.
We then define a ``target'' distribution $\phi(s) : \mathcal{X}_v \to \mathbb{R}^+$ with respect to which the robot agent is to be ergodic (i.e., time spent during a trajectory $x(t)$ is proportionate to the spatial statistics of that region).
An ergodic metric (based on a Sobolev space norm)~\cite{mathew2011metrics} which relates the two distributions $c(s,x(t))$ and $\phi(s)$ is:
\begin{align} \label{eq:ergodic_metric}
\mathcal{E}(x(t)) & = q \,\sum_{k \in \mathbb{N}^v} \Lambda_k \left(c_k -\phi_k \right)^2   \\
& = q \, \sum_{k \in \mathbb{N}^v} \Lambda_k \left( \frac{1}{T} \int_{t_i}^{t_i + T} F_k(x_v(t)) dt - \phi_k \right)^2 \nonumber
\end{align}
where 
\begin{equation}\label{eq:get_phik}
\phi_k = \int_{\mathcal{X}_v} \phi(s) F_k(s) ds
\end{equation}
and $c_k$ are the Fourier decompositions\footnote{The cosine basis function is used here, however, any choice of basis function $F_k$ can be used.} of $c(s,x(t))$ and $\phi(s)$ with
\begin{equation*}
F_k(x) = \frac{1}{h_k}\prod_{i=1}^v \cos \left( \frac{k_i \pi x_i}{L_i} \right)
\end{equation*}
being the cosine basis function for a given coefficient $k \in \mathbb{N}^v$,  $h_k$ is the normalization factor defined in~\cite{mathew2011metrics}, $\Lambda_k = (1 + \Vert k \Vert^2)^{-\frac{v+1}{2}}$ is a weight on the frequency coefficients, and $q\in \mathbb{R}^+$ is a weight on the ergodic metric.
Equation (\ref{eq:ergodic_metric}) now relates the time spent by the robot to the spatial statistics.
A robot whose control inputs result in a trajectory $x(t)$ that minimizes (\ref{eq:ergodic_metric}) is then said to be optimally ergodic with respect to the target distribution.

\subsection{Ergodic Control}

Control inputs that result in a maximally ergodic trajectory with respect to a target distribution are synthesized following the work in~\cite{mavrommati2017eSAC}.
Instead of directly minimizing (\ref{eq:ergodic_metric}) with respect to $x(t)$ and $u(t)$, we consider the sensitivity of (\ref{eq:ergodic_metric}) with respect to an infinitesimal application duration time $\lambda \to 0 \in \mathbb{R}^+$ of the best possible control $u_\star(t) : \mathbb{R}^+ \to \mathbb{R}^m$ that sufficiently reduces (\ref{eq:ergodic_metric}) at time $\tau \in \mathbb{R}^+$ from some default control $u_\text{def}(t) : \mathbb{R}^+ \to \mathbb{R}^m$.
This sensitivity (known as the mode insertion gradient~\cite{vasudevan2013consistent, axelsson2008gradient, egerstedt2006transition, caldwell2016projection}) is given by
\begin{equation*}
\frac{\partial \mathcal{E}}{\partial \lambda} \Big \vert_\tau = \rho(\tau)^T (f_2(\tau, \tau) - f_1(\tau))
\end{equation*}
where $f_2(t, \tau) = f(x(t), u_\star(\tau))$, $f_1(t) = f(x(t), u_\text{def}(t))$, and $\rho \in \mathbb{R}^n$ is the adjoint variable which is the solution to
\begin{equation*}
\dot{\rho} = - 2 \frac{q}{T} \sum_{k \in \mathbb{N}^v} \Lambda_k (c_k - \phi_k) \frac{\partial F_k}{\partial x} - \frac{\partial f}{\partial x}^T \rho
\end{equation*}
where $\rho(t_i + T) = \bold{0} \in \mathbb{R}^n$.
The control is then
\begin{equation}
u_\star = (\Omega + R^T)^{-1} \left[ \Omega u_\text{def} + h(x)^T \rho \alpha_d \right]
\end{equation} 
where $\Omega \triangleq h(x)^T \rho \rho^T h(x)$, $\alpha_d \in \mathbb{R}^-$ parametrizes the aggressiveness of the control, and $R \in \mathbb{R}^{m \times m}$ is a positive definite matrix that weighs the control $u_\star$.
Note that as in~\cite{mavrommati2017eSAC}, saturation is taken into account and duration time $\lambda$ is found using a line search. 

Further information about the algorithm provided in Algorithm \ref{alg:eSAC} can be found in~\cite{mavrommati2017eSAC}.

\begin{algorithm}
\caption{Ergodic Control} \label{alg:eSAC}
\centering
\begin{algorithmic}[1]
\State \textbf{given:} $x_i, \phi_{k,i}, t_i, T, \Delta t_\mathcal{E}$
\State  simulate $(x(t), \rho(t))$ from initial condition ($x_i, \rho_i, t_{i}, T$) and default control $u_\text{def}$
\State  compute $c_{k,i}$ using past history $\Delta t_\mathcal{E}$ and simulated trajectory $x(t)$
\State  calculate the control $u_\star (t)$ that reduces the ergodic metric with ($x(t), \rho(t), c_{k,i}, \phi_{k,i}$)
\State  $\tau \gets $ argmin $\partial \mathcal{E} / \partial \lambda$
\State saturate $u_\star(\tau)$
\State  compute $\lambda$ from line search\\
\Return$u_*(\tau)$ for duration $\lambda$
\end{algorithmic}
\end{algorithm}

The following two sections use the ergodic control policy for active sensing to first empirically determine the measurement likelihood of the environment in response to the sensor and then use the measurement model for localization.

\section{Control for Measurement Model Construction}
\label{sec:shape_estimation}

In this section, we focus on empirically constructing the measurement model of an environment from the movement of a robot and its sensor interaction with the environment.
We show how we can leverage the construction of the measurement model into the exploration procedure using ergodic control for active sensing.

\subsection{Construction of Measurement Likelihood with Active Sensing}

The approach here is similar to~\cite{abrahamRAL17} for shape estimation. 
The controller is initialized with a uniform target distribution over a finite domain (e.g., a volume in which an object resides); however, the controller can be initialized with a prior if information is known. 
The ergodic control policy then provides control input for the robot based on an initial measurement likelihood distribution (often uniform to start).
At each $t_i = t_{i-1} + t_s$ with sampling interval $t_s$, the robot collects measurements $x_i, y_i$. \footnote{Measurements can be collected asynchronously in order to prevent measurements with duration time $t_c < t_s$.}
In this work, we use a Support Vector Machine~\cite{platt1999probabilistic, wu2004probability, vapnik1998statistical}\footnote{A Gaussian kernel is used with variance parameter $\sigma^2 = 0.01$ with input as $x_i$ and output $y_i$.} to calculate $p(y_i \mid x_i, \sigma)$; however, any method\textemdash such as Gaussian processes~\cite{krause2007nonmyopic,matsubara2016active} or density estimators~\cite{silverman2018density}\textemdash can also work to approximate the measurement likelihood so long as one can generate a distribution over the measurements.
In the case of collision-based sensing, the coefficients $\phi_k$ in Eq.(\ref{eq:get_phik}) are calculated using $p(y=1 \mid x, \sigma)$ for $y=1$ yielding a collision measurement.
Since $p(y=1 \mid x, \sigma)$ is defined over the search space $s$, we numerically integrate (\ref{eq:get_phik}) for the current estimate of $p(y=1 \mid x, \sigma)$.
This allows the algorithm to leverage a particular desired measurement as an active learning procedure.  
The process for the construction of the measurement model is illustrated in Fig. \ref{shapeReconFlow} and shown in Algorithm~\ref{alg:obj_est}.

\begin{figure}[]
\includegraphics[scale=1]{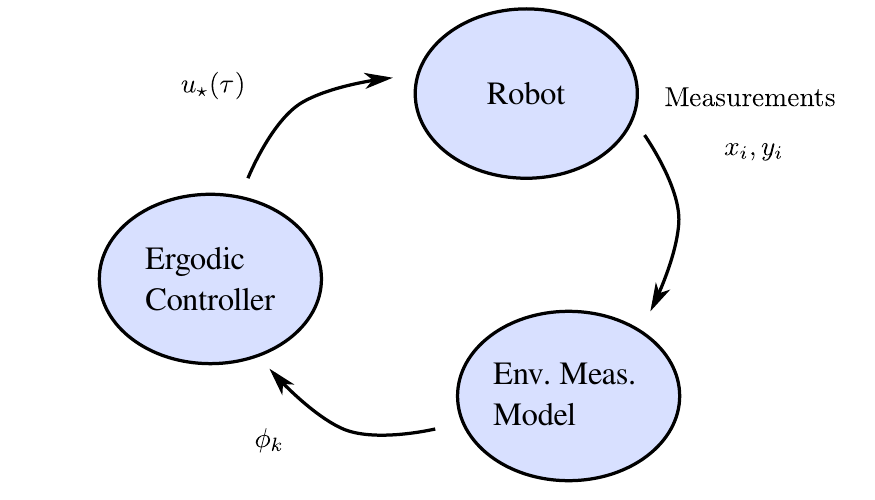}
\caption{ Block diagram for measurement model construction. Measurements are processed in the measurement likelihood probability. 
This information is then passed onto the ergodic controller which returns an active sensing policy for the robot. }
\vspace{-2mm}
\label{shapeReconFlow}
\end{figure}

\begin{algorithm}
\caption{Ergodic Control for Measurement Model Construction} \label{alg:obj_est}
\centering
\begin{algorithmic}[1]
\State \textbf{define:} $x_0, \phi_{k,0}, i=0, t_0, t_f, t_s, T, \Delta t_\mathcal{E}$
\While{$t_i < t_f$}
\State compute $u_\star(\tau)$ from Algorithm~\ref{alg:eSAC}
\State sample $x_i, y_i$ from robot (assume asynchronous measurements)
\State determine $p(y_i \mid x_i, \sigma)$ from data $x_i, y_i$ using SVM (or Gaussian process)
\State calculate target distribution $\phi_{k,i}$ from $p(y \mid x, \sigma)$
\State $i \gets i +1$
\EndWhile
\end{algorithmic}
\end{algorithm}

\section{Control for Localization using Environmental Information from Data-driven Measurement Models}
\label{sec:shape_localization}

In this section, we define the information of a environment measurement model and how one would actively sense with respect to the information of the model for localization.
Assume that we have a measurement model for an environment.
Given a new search space $\mathcal{W}\subseteq \mathbb{R}^v$ and that the model was acquired in some search space $\mathcal{P}\subseteq \mathbb{R}^v$, we can define a transform $g(\theta) \in \text{SE}(2)$\footnote{For simplicity we define the problem in the planar space, but more abstract and high dimensional transforms can be utilized.} parametrized by $\theta \in \mathbb{R}^3$ that transforms elements in $\mathcal{P}\to \mathcal{W}$ (see Fig.~\ref{fig:transf_illus} for illustration).
Localizing based on the data-driven measurement model is then reduced to identifying the parametrization $\theta$ that describes the transformation $g(\theta)$.

\vspace{-1mm}
\subsection{Information about $g\in \text{SE}(n)$}
\label{sec:eid}

In order to use ergodic control for active localization of the environment, we first need to define a distribution which captures where the robot should explore next based on the environment sensitivities with respect to its dependences.
In order to obtain this relationship, the Fisher information~\cite{fedorov2010optimal, chirikjian2000engineering, akaike1998information} of the measurement model of environment $\sigma$ with respect to changes in the search space is used.
While other measures of information such as entropy can be used, the Fisher information directly encodes parameter sensitivities that, as we will show in Lemma~\ref{lem1}, includes the Fisher information of how the robot explored the search space.
\begin{lemma}
\label{lem1}
{ \it The expected information of an environment measurement model with respect to a transformation $g(\theta)$ is given by
\begin{equation}\label{eq:target_eid}
\phi(s) = \det \left[ \int_\theta  \frac{\partial \left( g(\theta)^{-1} s \right) }{\partial \theta}^T I(s) \frac{\partial \left( g(\theta)^{-1} s \right)}{\partial \theta} p(\theta) d\theta \right]
\end{equation}
where
\begin{equation*}
I(s) =  \frac{\partial \Gamma}{\partial s}^T \Sigma^{-1} \frac{\partial \Gamma}{\partial s}
\end{equation*}
is the Fisher information with respect to the search space of the robot, $p(\theta)$ is the belief of the parameter $\theta$, and D-optimality~\cite{john1975d,fedorov2010optimal} (determinant) is chosen a measure of information.
}
\end{lemma}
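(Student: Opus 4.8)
The plan is to derive~(\ref{eq:target_eid}) by computing the Fisher information of the measurement likelihood with respect to the transformation parameter $\theta$, averaging it against the belief $p(\theta)$, and then applying the D-optimality scalarization. First I would set up the likelihood at a search-space point $s \in \mathcal{W}$: because the data-driven model $\Gamma(\sigma,\cdot)$ lives in the frame $\mathcal{P}$ in which it was constructed, the predicted measurement at $s$ is $\Gamma\!\left(\sigma, g(\theta)^{-1}s\right)$, so from~(\ref{eq:meas_model}) we have $y \mid s,\theta \sim \mathcal{N}\!\left(\Gamma(\sigma, g(\theta)^{-1}s),\,\Sigma\right)$ and $-\log p(y\mid s,\theta) = \tfrac12 (y-\Gamma)^T \Sigma^{-1}(y-\Gamma) + \mathrm{const}$.

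Next I would form the (per-$\theta$) Fisher information matrix $\mathcal{I}_\theta(s) = \mathbb{E}_{y}\!\left[\partial_\theta \log p \,\left(\partial_\theta \log p\right)^T\right]$. For a Gaussian family with $\theta$-independent covariance this is the standard identity $\mathcal{I}_\theta(s) = \left(\partial_\theta \Gamma\right)^T \Sigma^{-1}\left(\partial_\theta \Gamma\right)$, the cross terms dropping out because $\mathbb{E}_y[y - \Gamma]=0$, so the realized measurement does not appear. Differentiating the composition $\theta \mapsto g(\theta)^{-1}s \mapsto \Gamma$ by the chain rule, $\partial_\theta \Gamma = \frac{\partial \Gamma}{\partial s}\,\frac{\partial\left(g(\theta)^{-1}s\right)}{\partial \theta}$, substitution gives
\[
\mathcal{I}_\theta(s) = \frac{\partial\left(g(\theta)^{-1}s\right)}{\partial\theta}^T \left(\frac{\partial\Gamma}{\partial s}^T \Sigma^{-1}\frac{\partial\Gamma}{\partial s}\right) \frac{\partial\left(g(\theta)^{-1}s\right)}{\partial\theta} = \frac{\partial\left(g(\theta)^{-1}s\right)}{\partial\theta}^T I(s)\, \frac{\partial\left(g(\theta)^{-1}s\right)}{\partial\theta}.
\]

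The expected information is then obtained by marginalizing this matrix over the current parameter belief, $\int_\theta \mathcal{I}_\theta(s)\,p(\theta)\,d\theta$, which is precisely the bracketed quantity in~(\ref{eq:target_eid}); applying D-optimality means reporting its determinant, yielding $\phi(s)$. The mechanical steps — the Gaussian Fisher-information identity and the chain rule through $g(\theta)^{-1}$ — are routine, so I expect the main obstacle to be the modeling justification: arguing that the $\theta$-dependence of the learned model enters only through the pre-image $g(\theta)^{-1}s$, and that the appropriate notion of ``expected information'' here is the belief-averaged Fisher information $\int_\theta \mathcal{I}_\theta(s)\,p(\theta)\,d\theta$ rather than, e.g., a Bayesian posterior information including a prior-curvature term. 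That identification is the real content of the lemma; once it is fixed everything else is bookkeeping, and since the determinant is taken after integration there is no interchange-of-limits issue to address.
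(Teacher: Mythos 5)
Your proposal is correct and follows essentially the same route as the paper's proof: form the Fisher information matrix $\frac{\partial \Gamma}{\partial \theta}^T \Sigma^{-1} \frac{\partial \Gamma}{\partial \theta}$, apply the chain rule through $\bar{s} = g(\theta)^{-1}s$ to factor out $I(s)$, average against the belief $p(\theta)$, and take the determinant. The only difference is that you derive the Gaussian Fisher-information identity from the log-likelihood definition, whereas the paper asserts it directly; this is a minor elaboration, not a different approach.
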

\begin{proof}
We first calculate the Fisher information matrix of the empirically determined measurement model $\Gamma$ (Eq.~\ref{eq:meas_model}) with respect to the transformation:
\begin{equation} \label{eq:fish_wrt_theta}
I(s,\theta) = \frac{\partial \Gamma}{\partial \theta}^T \Sigma^{-1} \frac{\partial \Gamma}{\partial \theta}.
\end{equation}
Applying chain rule to the derivative terms in (\ref{eq:fish_wrt_theta}) gives us the expanded derivative terms
\begin{equation} \label{eq:chain_rule}
\frac{\partial \Gamma}{\partial \theta} = \frac{\partial \Gamma(\sigma, \overbrace{g(\theta)^{-1} s}^{\bar{s}} )}{\partial \bar{s}}\frac{\partial \left( g(\theta)^{-1} s \right)}{\partial \theta}.
\end{equation}
Replacing (\ref{eq:chain_rule}) into (\ref{eq:fish_wrt_theta}) gives
\begin{align}\label{eq:chain_expan}
I(s,\theta) &= \left( \frac{\partial \Gamma}{\partial \bar{s}}\frac{\partial g(\theta)^{-1} s}{\partial \theta} \right)^T \Sigma^{-1} \left(  \frac{\partial \Gamma}{\partial \bar{s}}\frac{\partial g(\theta)^{-1} s}{\partial \theta}\right) \\
& = \frac{\partial \left( g(\theta)^{-1} s \right) }{\partial \theta}^T \underbrace{ \left( \frac{\partial \Gamma}{\partial \bar{s}}^T \Sigma^{-1} \frac{\partial \Gamma}{\partial \bar{s}} \right) }_{I(s)} \frac{\partial \left( g(\theta)^{-1} s \right)}{\partial \theta} \nonumber.
\end{align}
Taking the expectation of (\ref{eq:chain_expan}) and the D-optimality measure~\cite{john1975d,fedorov2010optimal} gives us the expected information density of an environment measurement model
\begin{equation*}
\phi(s) = \det \left[ \int_\theta  \frac{\partial \left( g(\theta)^{-1} s \right) }{\partial \theta}^T I(s) \frac{\partial \left( g(\theta)^{-1} s \right)}{\partial \theta} p(\theta) d\theta \right].
\end{equation*}
\end{proof}

We can see that the Fisher information with respect to $\theta$ includes spatial information about the measurement model with respect to the search space. 
Thus, by specifying the expected information density as the target distribution for ergodic control, the robot driven towards regions where there is information about the transformation $g(\theta)$ and spatial features that are present in the measurement model.

\subsection{Estimating $p(\theta)$ using Environment Measurement Likelihood}

We can estimate $p(\theta)$ from the measurements using a Bayesian update~\cite{vapnik1998statistical, koval2015pose} of the form
\begin{equation} \label{bayes_update}
p_{i+1}(\theta) = \eta p(y_i \mid \theta, x_i, \sigma) p_i(\theta) 
\end{equation}
starting from a uniform prior $p_0(\theta)$.
Note that $p(y_i \mid \theta, x_i, \sigma)$ is simply a likelihood function as a function of $\theta$.
Therefore, we update the belief of the parameters $\theta$ using only the likelihood function which contains the family of all possible measurement models subject to transformation $g(\theta)$.
Because non-null measurements are sparse, and contact measurements are discontinuous, we use a manifold particle filter~\cite{koval2015pose,particle_filter,van2001unscented} in order to approximate (\ref{bayes_update}).
Other methods for updating $p(\theta)$ such as a Kalman filter~\cite{julier1997new} can be used; however, note that the sparsity in measurements can make the filter unstable due to the discontinuous measurement model~\cite{koval2015pose}.

The algorithm for active localization of the environment using Algorithm~\ref{alg:eSAC} is provided in Algorithm~\ref{alg:obj_local}.
An initial prior $p_0(\theta)$ and measurement likelihood function is defined.
The expected information of the environment information is calculated using finite difference.
Fourier coefficients $\phi_k$ of the expected information are computed numerically using the particles of the particle filter as a Monte Carlo integration.
Given some initial condition $x(t_0)$, the controller computes a control action subject to $\phi_k$.
Measurements are collected and used to update $p(\theta)$ using $p(y_i \mid \theta, x_i, \sigma)$.

\begin{algorithm}
\caption{Active Sensing for Localization using Data-driven Measurement Models} \label{alg:obj_local}
\centering
\begin{algorithmic}[1]
\State \textbf{initialize:} Algorithm~\ref{alg:eSAC},$x_0, p_0(\theta), \phi_k$
\While {$t_{curr} < t_f$} 
\State calculate $u_\star(\tau)$ from Algorithm~\ref{alg:eSAC}
\State apply $u_\star(t_i)$ to robot
\State sample $x_i, y_i$ from robot (assume asynchronous measurements)
\State update $p_i(\theta)$ from $p(y_i \mid \theta_i, x_i, \sigma)$
\State compute $\phi_k$ from expected information (\ref{eq:target_eid})
\State $i \gets i +1$
\EndWhile
\end{algorithmic}
\end{algorithm}

In the following sections, we present simulated and experimental examples of constructing environment measurement model and using the models for sensing and localization of the transformed environment.

In this section, we present simulated examples of environment localization using empirically determined environment measurement models.
We first focus on examples for measurement model construction for objects in an environment using contact in $\mathbb{R}^2$ and $\mathbb{R}^3$ and then show examples of localization using a known measurement model in $\mathbb{R}^2$.
We then provide a comparison with an entropy-based method to illustrate the benefits of using ergodic control for sensors with spatially sparse information.
The experimental section then demonstrates the end-to-end algorithm where we use a robot with a collision-based sensor for localizing an transformed environment from an empirically determined environment measurement model.

\section{Simulated Examples of Environmental Measurement Model Construction and Localization}
\label{sec:sim_ex}

\subsection{Measurement Model Estimation}
Examples of environment measurement models in $\mathbb{R}^2$ and $\mathbb{R}^3$ are shown for multiple objects in Fig.~\ref{fig:2Dobject_est} and ~\ref{fig:3Dtorus}.
The dynamics of the robot are given as a double integrator system with dynamics $\dot{\bold{x}} = \left[ \dot{x}, \dot{y}, u_1, u_2 \right]^T$ in $\mathbb{R}^2$ and $\dot{\bold{x}} = \left[ \dot{x}, \dot{y}, \dot{z}, u_1, u_2, u_3 \right]^T$ in $\mathbb{R}^3$.
Here, we use $p(y=1 \mid x, \sigma)$ as the target distribution for active sensing.

The robot is initialized with a uniform distribution and the measurement likelihood is constructed as measurements are taken.
This is shown in sliced regions in $\mathbb{R}^3$  where high values of the measurement likelihood guide the motion of the robot (the same method is used when there are multiple objects in $\mathbb{R}^2$). 
Note that while measurements with higher probability are frequently acquired, the ergodic policy requires sampling from low probability regions.
This enables the robot to acquire measurements from the environment where the measurement model is well establish, refining the overall model, while incorporating measurements where the environment is uncertain. 
Thus, ergodic exploration promotes effective exploration of the environment for construction of the measurement model. 

Using this measurement model that is constructed from active exploration, we perform active localization of the environment using the resulting information of the measurement model as the target for ergodic control.
We illustrate this in the following subsection.

\begin{figure}[ht!]
\centering
\includegraphics[scale=1.0]{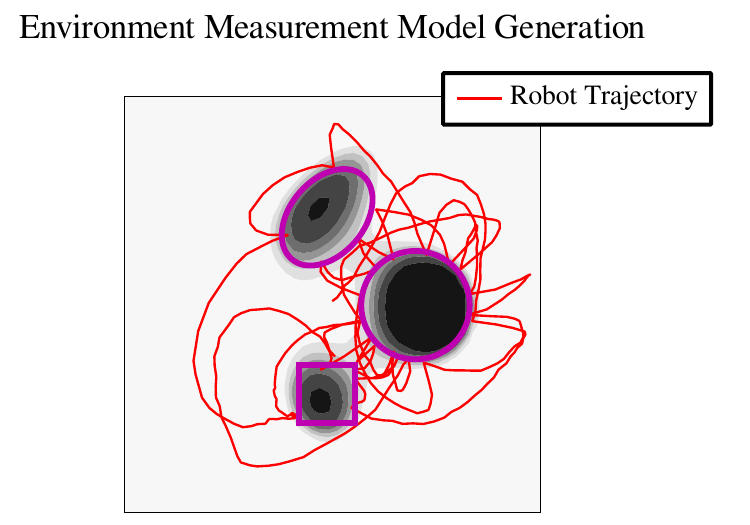}
\caption{
Estimating the measurement model of the environment with multiple objects in $\mathbb{R}^2 \subset \left[0,1 \right] \times \left[0,1 \right]$ search space using collision measurements. 
The objects are shown as with the purple outline. 
The simulation is initialized using a uniform distribution over the $\mathbb{R}^2$ search space.
The measurement likelihood from the data-driven measurement model is shown as the dark regions beneath the objects.
This distribution drives the ergodic controller as it is being created from data.
The resulting robot trajectory is shown as the red line.
Note that the robot still visits low probability region in search for new information.
Thus, the robot is able to circle around objects and better explored the environment.
} 
\label{fig:2Dobject_est}
\end{figure}

\begin{figure}[]
\centering
\includegraphics[scale=1.0]{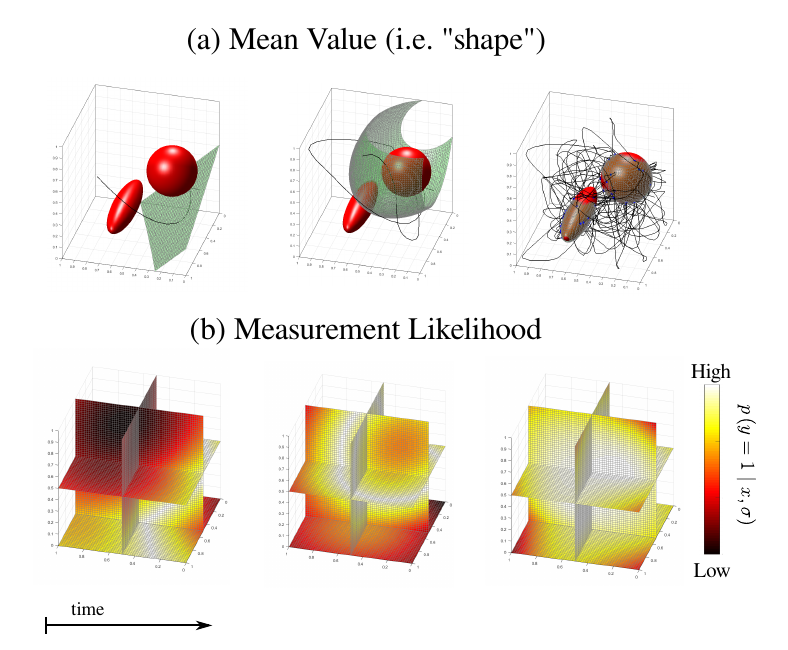}
\caption{
Estimating the measurement model of the environment with multiple objects in $\mathbb{R}^3 \subset \left[0,1 \right] \times \left[0,1 \right] \times \left[0,1 \right]$ search space using collision measurements. 
The simulation is initialized using a uniform distribution over the $\mathbb{R}^3$ search space.
(a) Mean values of the contact measurements $p(y \mid x,\sigma)$ are depicted as the green surfaces. 
Robot trajectory is shown as the black line.
(b) Slices of the measurement likelihood distribution are shown being constructed as the robot collects measurements.
Under ergodic control, the robot is able to explore the whole environment to generate the measurement model.
} 
\label{fig:3Dtorus}
\end{figure}

\begin{figure*}[ht!]
\centering
\includegraphics[scale=1]{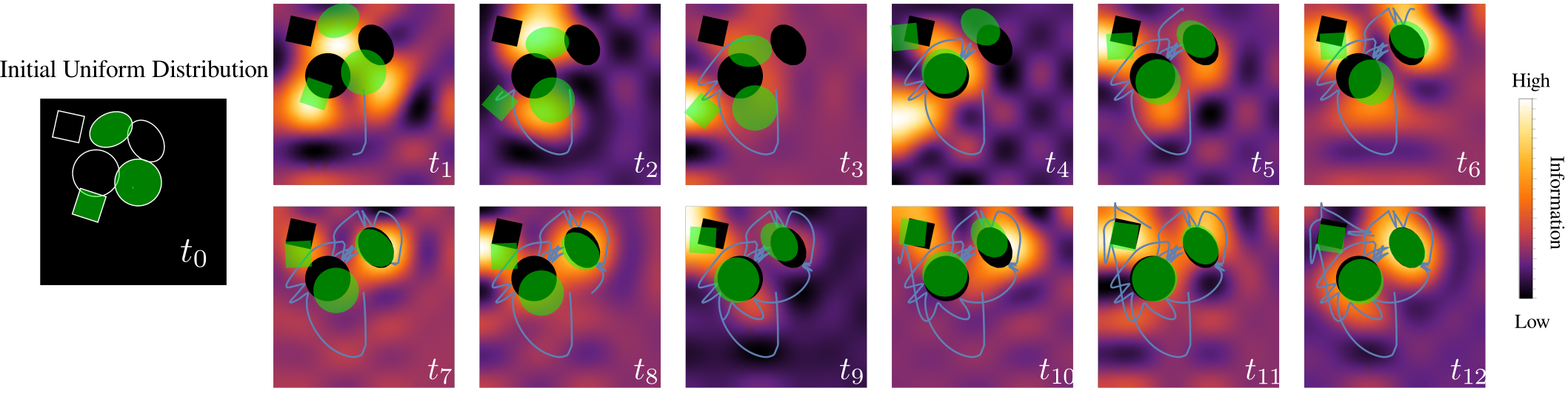}
\caption{
Localization of objects using a given measurement model is illustrated for an environment containing three objects.
The expected information of the measurement model is shown as the distribution in the background of each time frame $t_i$ space $1.25$ seconds apart.
The transparent green objects are the depiction of the current belief of their location $p(\theta)$.
The ground truth object location is shown in black and the robot trajectory in shown as the blue line. 
The robot estimates the transformation using the information of the environment measurement model as a target for active sensing.
We refer the reader to the multimedia \href{https://youtu.be/yKVlo_kE0cw}{video} for an animation of this result using a manifold particle filter~\cite{koval2015pose}.
} 
\label{many_shapes_transformation}
\end{figure*}

\subsection{Measurement Model-based Localization}

\begin{figure}[h!]
\centering
\includegraphics[scale=1]{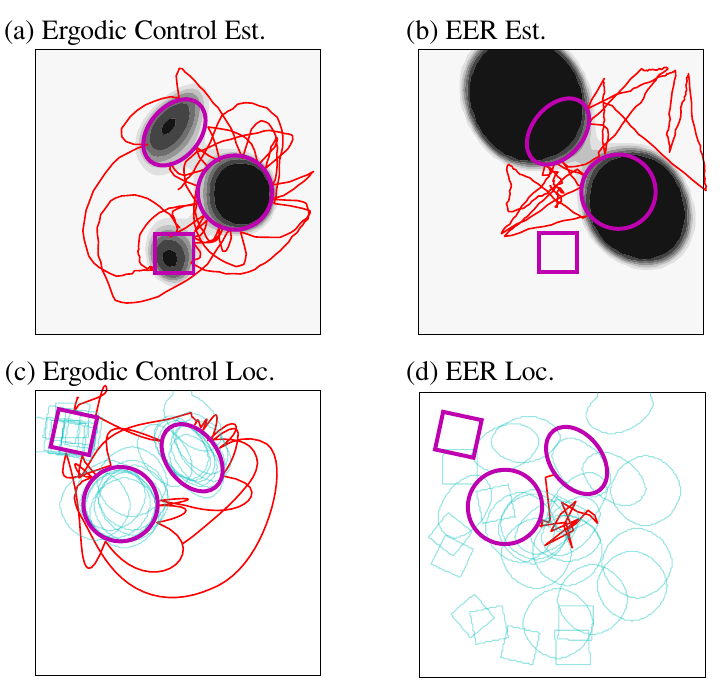}
\caption{ Comparison of Ergodic control versus entropy reduction for environment measurement estimation (a-b) and localization using the environment measurement model that is known (c-d).
Contact likelihood shown as the dark regions in (a-b).
Particle filter beliefs are shown as the cyan colored estimates of the shapes in (c-d).
Using ergodic control enables a robot to consider low probability regions which are beneficial in acquiring impact measurements that are sparse in the environment. 
Entropy reduction results in search patterns that immediately reduce the entropy which results in a lack of exploration in sparse environments. 
}
\vspace{-5mm}
\label{comparison_gEER}
\end{figure}

Simulated results for active sensing for localizing the environment from an environment measurement model are done with the double integrator model in $\mathbb{R}^2$.
In this example, three objects are in the environment: a circle, a square, and an oval as shown in Fig.~\ref{many_shapes_transformation}.
The environment measurement model is known and is used for active sensing using Algorithm~\ref{alg:obj_local}.
A uniform distribution over $\theta$ is defined as the prior.

As shown in Fig.~\ref{many_shapes_transformation}, at first contact, the belief in the environment's location starts to approach the ground truth shown as the black shapes. 
The expected information density shown in the background of Fig.~\ref{many_shapes_transformation} begins to collapse from the family of possible measurement models to a probabilistically certain measurement model.
As more measurements are aggregated, the prior on $\theta$ starts to converge.
Ergodic control then facilitates correction of the belief $p(\theta)$ via active sensing with respect to the environment information calculated from the measurement model.

\begin{figure*}[th!]
\centering
\includegraphics[scale=1]{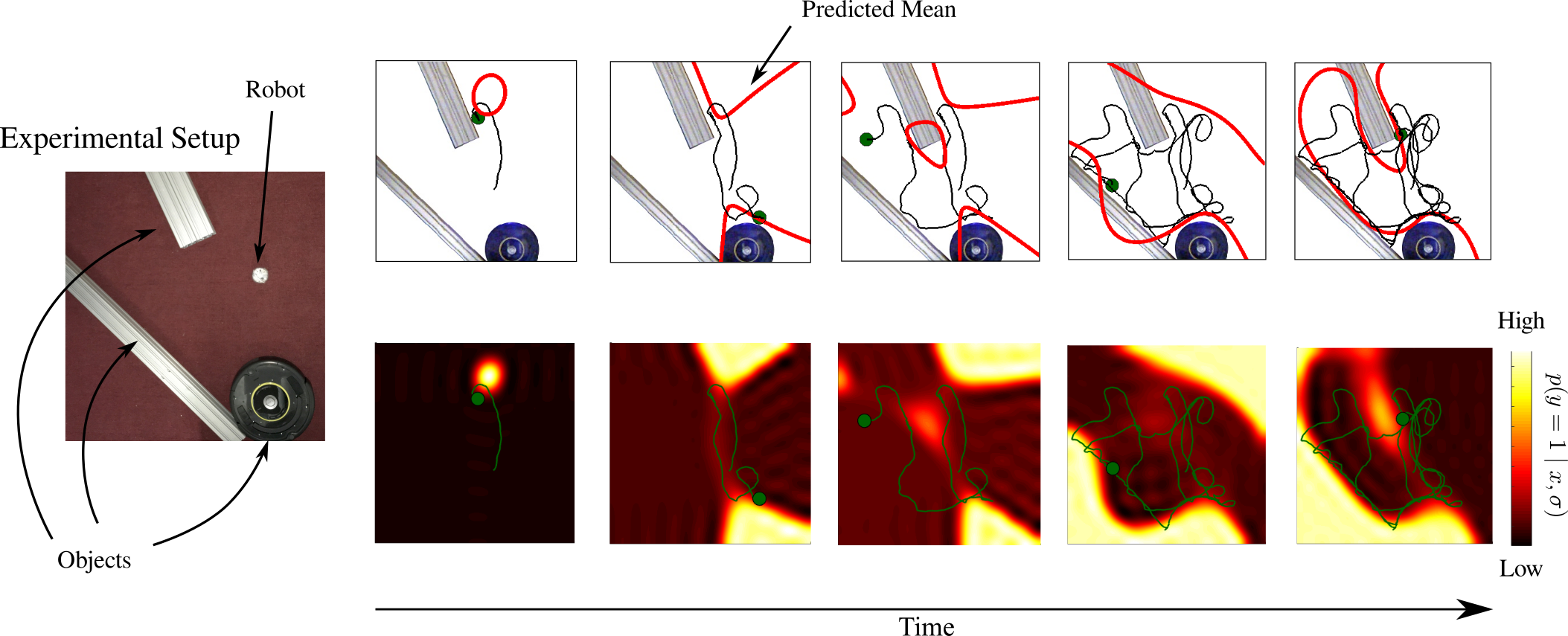} 
\caption{Left: Photo of the experiment setup of the environment state $\sigma$. The robot is initialized with a uniform target distribution. Top: Time series of the environment measurement model construction process from left to right. The red boundaries indicate the mean value of the measurement likelihood (i.e., shape). Sphero robot is shown as a green circle. The robot trajectory up to the current time is also shown. Bottom: $p(y=1 \mid x, \sigma)$ is shown being built as the robot collides with the objects in the environment.
We refer the reader to the multimedia \href{https://youtu.be/yKVlo_kE0cw}{video} for the experimental videos.
} 
\label{map_estimation}
\end{figure*}

 \begin{figure*}[ht!]
\centering
\includegraphics[scale=1.0]{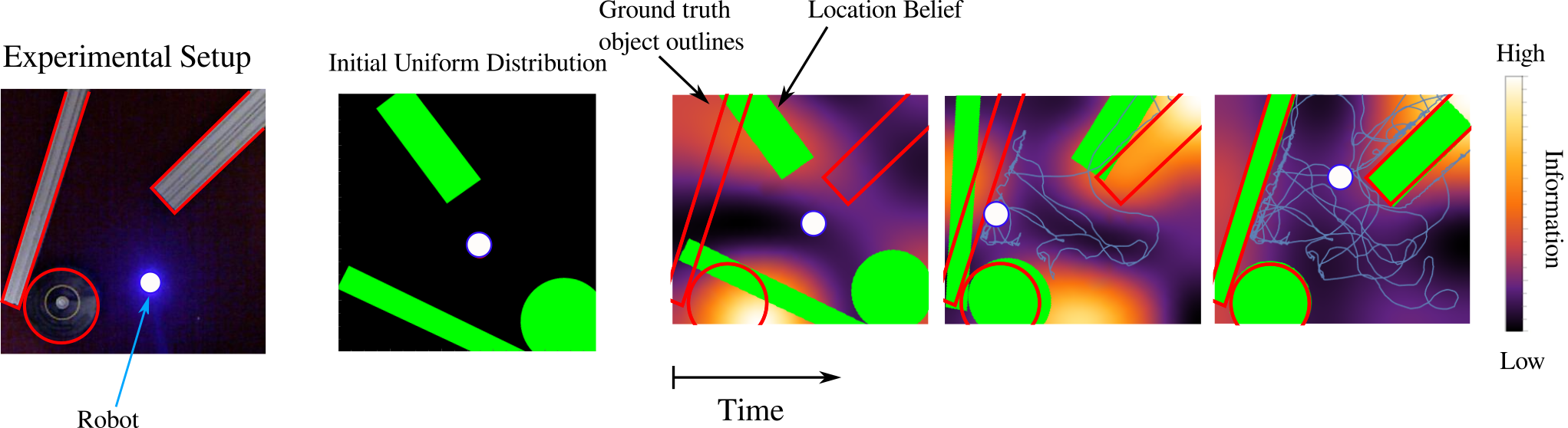} 
\caption{The robot estimates the transformation of the environment state using the measurement likelihood extracted from Fig.~\ref{map_estimation} using contact measurements. The global position of the objects is shown as the red outlines. The robot ergodic control policy result in a sequence of contact measurements that localizes the environment based on the environment information. We refer the reader to the multimedia \href{https://youtu.be/yKVlo_kE0cw}{video} for the experimental videos of the end-to-end experiment.
 }
\label{map_transformation}
\end{figure*}

\subsection{Comparison with Entropy Reduction}
\label{sec:comparison_gEER}
We compare our method using ergodic control with another method that is similar in nature known as expected entropy reduction (EER)~\cite{miller2015optimalrange, miller2016ergodic, abrahamRAL17}.
The algorithm is similar to that used in~\cite{kreucher2005sensor, fox1998active,feder1999adaptive, kreucher2007multiplatform} where a location in the search space is chosen based on where the expected change in entropy is maximized.
The location is chosen from a set of $200$ samples uniformly distributed in the search space. 
The trajectory that maximizes the entropy reduction is then chosen and an LQR controller is used to control the robot with the double integrator dynamics mentioned in the previous subsection. 

Here, the reduction in entropy is calculated as
\begin{equation*}
H(\theta) - \mathbb{E}\left[ H(\theta) \mid y(t)^+\right]
\end{equation*} 
where
\begin{equation*}
H(\theta) = - \int p(\theta) \log p(\theta) d \theta
\end{equation*}
is the entropy for a random variable $\theta$, and $y(t)^+$ is the expected measurement given the current model along the sampled trajectory.
In the case of measurement model construction with impact, $\theta$ is replaced with $y=1$ whereas in environment localization, the random variable is the transform parameters $\theta$ for $g(\theta)\in \text{SE}(2)$.

We compare a $20$ second simulation of environment construction and localization (with the known environment measurement model) in Fig.~\ref{comparison_gEER}.
Because EER prioritizes immediate reduction in entropy, only the most recent and known measurements are used.
In situations where measurements are sparse, this causes the EER algorithm to become stuck as shown in Fig.~\ref{comparison_gEER}(d). 
Similarly, in Fig.~\ref{comparison_gEER}(b), EER is only able to acquire measurements with two out of the three objects in the environment, and only one side of each object.
In contrast, using ergodic control as shown in Fig.~\ref{comparison_gEER} (a) and (c) illustrates the benefit for sensing sparse impact signals in unknown environments.
Low probability regions are not omitted and are important to both stages of our method. 
As time goes to infinity, the ergodic controller would drive the robot to explore the whole domain such that the time spent is proportional to the statistics in the region~\cite{miller2016ergodic, miller2015optimalrange, mavrommati2017eSAC, abrahamRAL17}.

In the following section, we realize our method end-to-end on a colliding robot experiment.

\section{Experimental examples of Environmental Measurement Model Construction and Localization using Sphero SPRK}
\label{sec:exp_ex}

In this section, we present experimental examples of the end-to-end algorithm for localizing an environment using an empirically determined environment measurement model.
A rolling robot driven by an internal differential drive known as the Sphero SPRK is used for collision-based sensing.
The SPRK robot has an internal contact sensor which uses its inertial measurement unit to detect a collision with an external object.
Position of the robot is acquired using OpenCV~\cite{opencv_library} and a Kinect camera.
Communication between the position tracker, controller, and robot is done through ROS~\cite{ROS}.
Once a measurement model is generated in the first stage, we use this model to localize the transformed environment.
Note that the only assumption that is made is that the robot moves subject to the double integrator dynamics in $\mathbb{R}^2$.
 
\subsection{Measurement Model Construction}
\label{sec:shape_est_exp}

Experimental results for constructing the environment measurement model using collisions with the SPRK robot are shown in Fig.~\ref{map_estimation}.
Within the first few measurements, the environment measurements are not useful as the robot has not explored the environment domain sufficiently to find impacts. 
However, due to the low probability value of unexplored regions, the robot under ergodic control must still visit these regions.
In doing so, the robot encounters more impact measurements by exploring the environment.
By the termination time ($t_f=120$ seconds), the robot has constructed an environment measurement model that is sufficient for subsequent localization.

It is worth noting that the collision sensor of the robot is sensitive to quick movements which will result in noisy measurements.
While in a engineered or physics-based approach, the noise of a robot's position and sensors are derived, our method automatically incorporates the noise effects into the measurement model.
Moreover, the underlying physical constraints in the robot and the environment are also captured in the environment measurement model.
This allows us to model the environment empirically without the need to model or estimate the environment or robot parameters.
In the following section, we demonstrate localization of the environment with respect to the information of the constructed measurement model. 

\subsection{Localization}

Using the measurement model from the previous subsection, we localize the transformed environment.
In Fig.~\ref{map_transformation}, the objects, as presented in Fig.~\ref{map_estimation}, are rotated and translated in this localization experiment. 
The initial uniform prior belief $p_0(\theta)$ is defined as a uniform distribution over the domain $\theta \in [0,1.2] \times [0, 1] \times [-2 \pi, 2 \pi]$ for the positional and rotational transformation.
Here, the experiment is run for $t_f=130$ seconds.
Once the robot begins to collect measurements, we can see that the localization of the environment starts to approach the ground truth shown as the red outlined shapes.

While collisions are important in this experiment, non-collisions are equally if not more important for localizing the environment based on the empirically determined measurement model.
Consider that collision measurements are rare occurrences in the environment that is sparse with respect to the number of objects that occupy the space.
As a consequence, the environment measurement model will be based on not having collision measurements in particular regions of space which is used in this example to further estimate the location of the transformed environment. 
We can see this in the final frame of Fig.~\ref{map_transformation} where the robot explored the regions where collision is less likely to occur.
This drives the robot to search in regions where there are no collision measurements.
In the end of the experiment, the robot is successful in estimating the current configuration ($\theta_\text{actual} = \left[ 0.5, 0.6, -1.1 \right]$, $\theta_\text{estimated} = \left[ 0.521, 0.609, -1.103 \right]$) of the environment through the use of the data-driven measurement model of the environment. 

\section{Conclusion}
\label{sec:conclusions}
We present a method for estimating measurement models characterized by spatially sparse information through active sensing and localizing with respect to these data-driven measurement models.
The ergodic control policy enables a robot to explore in sparse environments in search for new information.
We are able to show that active sensing based on data-driven environmental measurement models can be used to localize within the environment taking into account the physical constraints of the robot and the sensor physics.
Through simulated and experimental examples, it is shown that collision-based sensing, coupled with our method, is sufficient for measurement model generation and later localization.
In addition, using an ergodic control policy as the principle for active sensing is shown to direct the robot towards regions that exploit features in an environment discovered from data.
Experimental validation of the method shows that this approach can run in real-time on a robotic system.
Lastly, this method can be used for any sensor in an environment characterized by spatially sparse information by relying on motion through active sensing.

\section*{Acknowledgments}
This material is based upon work supported by the National Science Foundation under awards IIS-1426961. Any opinions, findings, and conclusions or recommendations expressed in this material are those of the author(s) and do not necessarily reflect the views of the National Science Foundation.

\balance

\bibliographystyle{plainnat}
{\small
\bibliography{references}}

\end{document}